\newcommand{\la}{\lambda}
\newcommand{\R}{\mathbb{R}}
\newcommand{\Rd}{\mathbb{R}^d}
\newcommand{\C}{\mathbb{C}}
\def\ba#1\ea{\begin{align*}#1\end{align*}}	
\def\ban#1\ean{\begin{align}#1\end{align}}	
\newtheorem{theorem}{Theorem}
\newtheorem{lemma}{Lemma}
\newtheorem{definition}{Definition}
\newtheorem{proposition}{Proposition}
\newtheorem{corollary}{Corollary}
\newtheorem{remark}{Remark}
\def\@IEEEsectpunct{.\ \,}
\title{Deep Convolutional Neural Networks\\ on Cartoon Functions}
\author{\IEEEauthorblockN{Philipp Grohs\IEEEauthorrefmark{1},
Thomas Wiatowski\IEEEauthorrefmark{2}, and Helmut B\"olcskei\IEEEauthorrefmark{2}}
\IEEEauthorblockA{\IEEEauthorrefmark{1}Dept. Math., ETH Zurich, Switzerland, and Dept. Math., University of Vienna, Austria\\
\IEEEauthorrefmark{2}Dept. IT \& EE, ETH Zurich, Switzerland,\\
\IEEEauthorrefmark{1}philipp.grohs@sam.math.ethz.ch, \IEEEauthorrefmark{2}\{withomas, boelcskei\}@nari.ee.ethz.ch}}
\begin{document}
\maketitle

\begin{abstract} Wiatowski and B\"olcskei, 2015, proved that deformation stability and vertical translation invariance of deep convolutional neural network-based feature extractors are guaranteed by the network structure per se rather than the specific convolution kernels and non-linearities. 
While the translation invariance result applies to square-integrable functions, the deformation stability bound holds for band-limited functions only. Many signals of practical relevance (such as natural images) exhibit, however, sharp and curved discontinuities and are, hence, not band-limited. The main contribution of this paper is a deformation stability result that takes these structural properties into account. Specifically, we establish deformation stability bounds for the class of cartoon functions introduced by Donoho, 2001. 

\end{abstract}

\section{Introduction}

Feature extractors based on so-called deep convolutional neural networks have been applied with tremendous success in a wide range of practical signal classification tasks \cite{Nature}. These networks are composed of multiple layers, each of which computes convolutional transforms, followed by the application of non-linearities and pooling operations. 

The mathematical ana\-lysis of feature extractors
gene\-rated by deep convolutional neural networks was initiated in a se\-minal paper by Mallat \cite{MallatS}. Specifically, Mallat analyzes so-called scattering networks, where signals are propagated through layers that compute semi-discrete wavelet transforms (i.e., convolutional transforms with pre-specified filters obtained from a mother wavelet through scaling operations), followed by modulus non-linearities. It was shown in \cite{MallatS} that the resulting wavelet-modulus feature extractor is horizontally translation-invariant \cite{Wiatowski_journal} 
and deformation-stable, with the stability result applying to a function space that depends on the underlying mother wavelet.

Recently, Wiatowski and B\"olcskei \cite{Wiatowski_journal} extended Mallat's  theory to incorporate convolutional transforms with filters that are (i) pre-specified and potentially structured such as Weyl-Heisenberg (Gabor) functions \cite{Groechenig}, wavelets \cite{Daubechies}, cur\-ve\-lets \cite{CandesDonoho2},  shearlets \cite{Shearlets}, and ridge\-lets \cite{Ridgelet}, (ii) pre-specified and unstructured such as random filters \cite{Jarrett}, and (iii) learned in a supervised \cite{Huang} or unsupervised \cite{hierachies} fashion. Furthermore, the networks in \cite{Wiatowski_journal} may employ gene\-ral Lipschitz-continuous non-linearities (e.g., rectified linear units, shifted logistic sigmoids, hyperbolic tangents, and the modulus function) and pooling through sub-sampling. The essence of the results in \cite{Wiatowski_journal} is that vertical translation invariance and deformation stability 
are induced by the network structure per se rather than the specific choice of filters and non-linearities. While the vertical translation invariance result in \cite{Wiatowski_journal} is general in the sense of applying to the  function space $L^2(\Rd)$, the deformation stability result in \cite{Wiatowski_journal} pertains to square-integrable band-limited functions. Moreover,  the corresponding deformation stability bound depends linearly on the bandwidth. 

Many signals of practical relevance (such as natural ima\-ges) can be  modeled as square-integrable functions that are, however, not band-limited or have large bandwidth. Large bandwidths render the deformation stability bound in \cite{Wiatowski_journal} void as a consequence of its linear dependence on bandwidth. 

\paragraph*{Contributions} The question considered in this paper is whether taking structural properties of natural images into account can lead to stronger deformation stability bounds. We show that the answer is in the affirmative by analyzing the class of cartoon functions introduced in \cite{Cartoon}. Cartoon functions satisfy mild decay properties and are piecewise continuously differentiable apart from curved discontinuities along $C^2$-hypersurfaces. Moreover, they provide a good model for natural images such as those in the MNIST \cite{MNIST}, Caltech-256 \cite{Caltech256}, and CIFAR-100 \cite{CIFAR2} datasets as well as for images of geometric objects of different shapes, sizes, and colors \cite{BABYAI}. The proof of our main result is based on the decoupling technique introduced in   \cite{Wiatowski_journal}. The essence of decoupling is that contractivity of the feature extractor combined with deformation stability of the signal class under consideration---under smoothness conditions on the deformation---establishes deformation stability for the feature extractor. Our main technical contribution here is to prove deformation stability for the class of cartoon functions. Moreover, we show that the decay rate of the resulting deformation stability bound is best possible. The results we obtain further underpin the observation made in \cite{Wiatowski_journal} of deformation stability and vertical translation invariance being induced by the network structure per se.
\begin{figure*}
\centering
\begin{tikzpicture}[scale=2,level distance=10mm,>=angle 60]

  \tikzstyle{every node}=[rectangle, inner sep=1pt]
  \tikzstyle{level 1}=[sibling distance=30mm]
  \tikzstyle{level 2}=[sibling distance=10mm]
  \tikzstyle{level 3}=[sibling distance=4mm]
  \node {$U[e]f=f$}
	child[grow=90, level distance=.45cm] {[fill=gray!50!black] circle (0.5pt)
		child[grow=130,level distance=0.5cm] 
        		child[grow=90,level distance=0.5cm] 
        		child[grow=50,level distance=0.5cm]
		child[level distance=.25cm,grow=215, densely dashed, ->] {}  
	}
        child[grow=150] {node {$U\big[\lambda_1^{(j)}\big]f$}
	child[level distance=.75cm,grow=215, densely dashed, ->] {node {$\big(U\big[\lambda_1^{(j)}\big]f\big)\ast\chi_{1}$}
	}
	child[grow=83, level distance=0.5cm] 
	child[grow=97, level distance=0.5cm] 
        child[grow=110] {node {$U\big[\big(\lambda_1^{(j)},\lambda_2^{(l)}\big)\big]f$}
	child[level distance=1cm,grow=215, densely dashed, ->] {node {$\big(U\big[\big(\lambda_1^{(j)},\lambda_2^{(l)}\big)\big]f\big)\ast\chi_{2}$}%
	}
        child[grow=130] {node {$U\big[\big(\lambda_1^{(j)},\lambda_2^{(l)},\lambda_3^{(m)}\big)\big]f$}
	child[level distance=0.75cm,grow=215, densely dashed, ->] {node {}}
	}
        child[grow=90,level distance=0.5cm]
 	child[grow=50,level distance=0.5cm]
       }
       child[grow=63, level distance=1.05cm] {[fill=gray!50!black] circle (0.5pt)
	child[grow=130,level distance=0.5cm] 
       child[grow=90,level distance=0.5cm] 
       child[grow=50,level distance=0.5cm]
       child[level distance=.25cm,grow=325, densely dashed, ->] {}    
       }
       }
       child[grow=30] {node {$U\big[\lambda_1^{(p)}\big]f$}
       child[level distance=0.75cm, grow=325, densely dashed, ->] {node {$\big(U\big[\lambda_1^{(p)}\big]f\big)\ast\chi_{1}$}
	}
	child[grow=83, level distance=0.5cm] 
	child[grow=97, level distance=0.5cm]
        child[grow=117, level distance=1.05cm] {[fill=gray!50!black] circle (0.5pt)
        child[grow=130,level distance=0.5cm] 
        child[grow=90,level distance=0.5cm] 
        child[grow=50,level distance=0.5cm] 
        child[level distance=.25cm,grow=215, densely dashed, ->] {}  
	 }
        child[grow=70] {node {$U\big[\big(\lambda_1^{(p)},\lambda_2^{(r)}\big)\big]f$}
	 child[level distance=1cm,grow=325, densely dashed, ->] {node {$\big(U\big[\big(\lambda_1^{(p)},\lambda_2^{(r)}\big)\big]f\big)\ast\chi_{2}$}}
	child[grow=130,level distance=0.5cm] 
         child[grow=90,level distance=0.5cm] 
             child[grow=50] {node {$U\big[\big(\lambda_1^{(p)},\lambda_2^{(r)},\lambda_3^{(s)}\big)\big]f$}
             child[level distance=0.75cm,grow=325, densely dashed, ->] {node {}}}
	}
     }
	child[level distance=0.75cm, grow=215, densely dashed, ->] {node {$f\ast \chi_0$}};
\end{tikzpicture}
\caption{Network architecture underlying the feature extractor  \eqref{ST}. The index $\lambda_{n}^{(k)}$ corresponds to the $k$-th atom $g_{\lambda_{n}^{(k)}}$ of the collection $\Psi_n$ associated with the $n$-th network layer. The function $\chi_{n}$ is the output-generating atom of the $n$-th layer.} 
\label{fig:gsn}
\end{figure*}
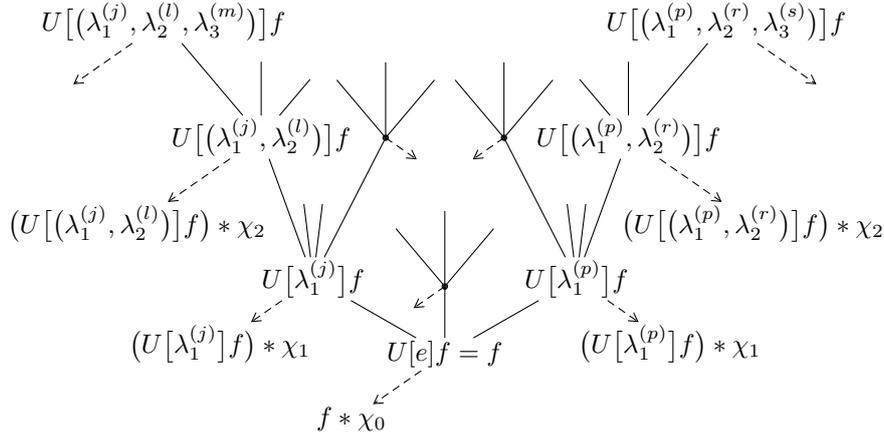
\paragraph*{Notation}
We refer the reader to \cite[Sec. 1]{Wiatowski_journal} for the general notation employed in this paper. In addition, we will need the following notation.  For $x\in \mathbb{R}^d$, we set $\langle x\rangle := (1+|x|^2)^{1/2}$. The Minkowski sum of sets $A,B\subseteq \R^d$ is $(A+B):=\{ a+b \, | \, a\in A, \ b \in B \}$. The indicator function
 of a set $B\subseteq \Rd$ is defined as $\mathds{1}_B(x):=1$, for $x\in B$, and $\mathds{1}_B(x):=0$, for $x\in \Rd\backslash B$. For a measurable set $B\subseteq \R^d$, we let $\mbox{vol}^{d}(B):=\int_{\R^d}\mathds{1}_B(x)\mathrm dx=\int_{B}1\mathrm dx$. 
\section{Deep convolutional neural network-based feature extractors}\label{architecture}
We set the stage by briefly reviewing the deep convolutional feature extraction network presented in  \cite{Wiatowski_journal}, the basis of which is a sequence of triplets
$
\Omega:=\big( (\Psi_n,M_n,R_n)\big)_{n \in \mathbb{N}}
$
referred to as module-sequence. The triplet $(\Psi_n,M_n,R_n)$---associated with the $n$-th network layer---consists of (i) a collection $\Psi_n:=\{ g_{\la_n}\}_{\lambda_n  \in \Lambda_n}$ of so-called atoms $g_{\lambda_n}\in L^1(\Rd) \cap L^2(\Rd)$, indexed by a countable set $\Lambda_n$ and satisfying the Bessel condition $\sum_{\lambda_n \in \Lambda_n}\| f\ast g_{\lambda_n}\|^2\leq B_n\| f\|_2^2$, for all $f\in L^2(\Rd)$, for some $B_n>0$, 
(ii) an operator $M_n:L^2(\Rd)\to L^2(\Rd)$ satisfying the Lipschitz property $\| M_nf-M_nh\|_2 \leq L_n \| f-h \|_2$, for all $f,h\in L^2(\Rd)$, and $M_nf=0$ for $f=0$, and (iii) a sub-sampling factor $R_n\geq 1$. Associated with $(\Psi_n,M_n,R_n)$, we define the operator  
 \begin{equation}\label{eq:1}
U_n[\lambda_n]f:=R_n^{d/2}\big(M_n(f\ast g_{\la_{n}})\big)(R_n\cdot),
\end{equation}   
and extend it to paths on index sets $q=(\lambda_1,\lambda_{2},\dots, \lambda_n) \in \Lambda_1\times \Lambda_{2}\times \dots \times \Lambda_n:=\Lambda_1^n$, $n \in \mathbb{N}$, according to
\begin{equation*}\label{aaaa}
\begin{split}
U[q]f=&\,U[(\lambda_1,\lambda_{2},\dots,\lambda_n)]f\\:=&\, U_n[\lambda_n] \cdots U_{2}[\lambda_{2}]U_{1}[\lambda_{1}]f,
\end{split}
\end{equation*}
where for the empty path $e:=\emptyset$ we set $\Lambda_1^0:=\{ e \}$ and $U[e]f:=f$, for  $f\in L^2(\Rd)$. 

\begin{remark}
The Bessel condition on the atoms  $g_{\lambda_n}$ is equi\-valent to $\sum_{\lambda_n \in \Lambda_n} |\widehat{g_{\lambda_n}}(\omega)|^2\leq B_n,$ for a.e. $\omega \in \Rd$ (see \cite[Prop. 2]{Wiatowski_journal}), and is hence easily satisfied even by learned filters \cite[Remark 2]{Wiatowski_journal}. An overview of collections $\Psi_n=\{ g_{\lambda_n}\}_{\lambda_n\in \Lambda_n}$ of structured atoms $ g_{\lambda_n}$ (such as, e.g., Weyl-Heisenberg  (Gabor) functions, wavelets, curvelets, shearlets, and ridgelets) and non-linearities $M_n$ widely used in the deep learning literature (e.g., hyperbolic tangent, shifted logistic sigmoid, rectified linear unit, and modulus function) is provided in \cite[App. B-D]{Wiatowski_journal}. 
\end{remark}

For every $n\in \mathbb{N}$, we designate one of the atoms $\Psi_n=\{ g_{\lambda_n}\}_{\lambda_n \in \Lambda_n}$ as the output-generating atom $\chi_{n-1}:=g_{\lambda^\ast_n}$, $\lambda^\ast_n \in \Lambda_n$, of the $(n-1)$-th layer. The atoms $\{ g_{\lambda_n}\}_{\lambda_n \in \Lambda_n\backslash\{ \lambda^\ast_n\}}\cup \{ \chi_{n-1}\}$ are thus used across two consecutive layers in the sense of $\chi_{n-1}=g_{\lambda^\ast_n}$ generating the output in the $(n-1)$-th layer, and the remaining atoms $\{ g_{\lambda_n}\}_{\lambda_n \in \Lambda_n\backslash\{ \lambda^\ast_n\}}$  propagating signals to the $n$-th layer according to \eqref{eq:1}, see Fig. \ref{fig:gsn}. From now on, with slight abuse of notation, we write $\Lambda_n$ for $\Lambda_n\backslash\{ \lambda^\ast_n\}$ as well. 

The extracted features $\Phi_\Omega(f)$ of a signal $f\in L^2(\Rd)$ are defined as \cite[Def. 3]{Wiatowski_journal}
\begin{equation}\label{ST}
\Phi_\Omega (f):=\bigcup_{n=0}^\infty\{ (U[q]f) \ast \chi_{n} \}_{q \in \Lambda_1^n},
\end{equation}
where $(U[q]f) \ast \chi_{n}$, $q\in \Lambda_1^{n}$, is a feature generated in the $n$-th layer of the network, see Fig. \ref{fig:gsn}. 
It is shown in \cite[Thm. 2]{Wiatowski_journal} that for all $f \in L^2(\Rd)$ the feature extractor $\Phi_\Omega$ is vertically translation-invariant in the sense of the layer depth $n$ determining the extent to which the features $(U[q]f) \ast \chi_{n}$, $q\in \Lambda_1^{n}$,  are translation-invariant. 
Furthermore, under the condition
\begin{equation}\label{weak_admiss2}
 \max_{n\in\mathbb{N}}\max\{B_n,B_nL_n^2 \}\leq 1, 
 \end{equation}
 referred to as \textit{weak admissibility condition} in \cite[Def. 4]{Wiatowski_journal} and satisfied by a wide variety of module sequences $\Omega$ (see \cite[Sec. 3]{Wiatowski_journal}), the following  result is established in \cite[Thm. 1]{Wiatowski_journal}: The feature extractor $\Phi_\Omega$ is deformation-stable on the space of $R$-band-limited functions $L^2_R(\Rd)$ w.r.t. deformations $(F_{\tau} f)(x):=f(x-\tau(x))$, i.e., there exists a universal constant $C>0$ (that does not depend on $\Omega$) such that for all $f \in L^2_R(\Rd)$ and all (possibly non-linear) $\tau \in C^1(\Rd,\Rd)$ with $\| D \tau \|_\infty\leq\frac{1}{2d}$, it holds that
\begin{equation}\label{eq:oldstab}
||| \Phi_\Omega(F_{\tau} f)-\Phi_\Omega(f) |||\leq C R\| \tau \|_\infty \| f \|_2.
\end{equation}
Here, the feature space norm is defined as $||| \Phi_\Omega(f)|||^2:=\sum_{n=0}^{\infty}\sum_{q\in \Lambda_{1}^n}\| (U[q]f)\ast \chi_n\|_2^2$.\\

For practical classification tasks, we can think of the deformation $F_\tau$ as follows. Let $f$ be a representative of a certain signal class, e.g., $f$ is an image of the handwritten digit ``$8$'' (see Fig. \ref{fig:data}, right). Then, $\{ F_\tau f \ | \ \| D\tau \|_\infty<\frac{1}{2d}\}$ is a collection of images of the handwritten digit ``$8$'', where each $F_\tau f$ may be generated, e.g., based on a different handwriting style. The bound $\| D\tau \|_\infty<\frac{1}{2d}$ on the Jacobian matrix of $\tau$ imposes a quantitative limit on the amount of deformation tolerated, rendering the bound \eqref{eq:oldstab} to implicitly depend on $D\tau$. The deformation stability bound \eqref{eq:oldstab} now guarantees that the features corresponding to the images in the set $\{ F_\tau f \ | \ \| D\tau \|_\infty<\frac{1}{2d}\}$ do not differ too much.
\begin{figure}
\centering
	\includegraphics[width = .2\textwidth]{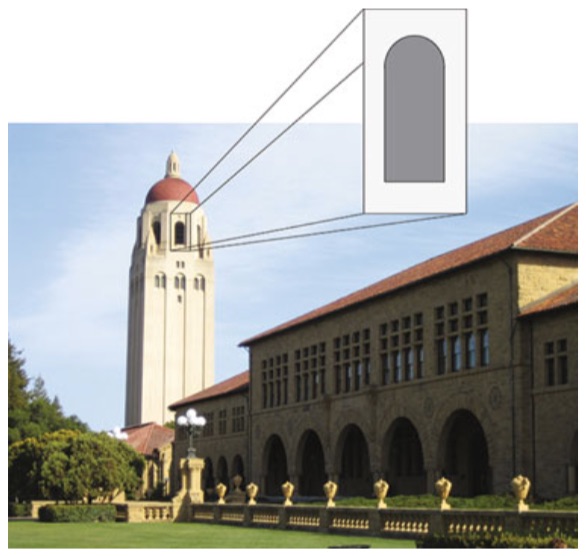}
	\includegraphics[width = .2\textwidth]{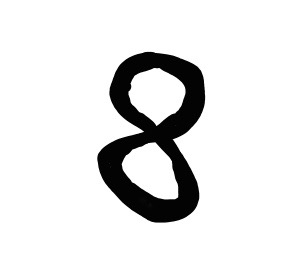}
	\caption{Left: A natural image (image credit: \cite{ShearletsIntro}) is typically governed by  areas of little variation, with the individual areas separated by edges that can be modeled as curved singularities. Right: An image of a handwritten digit. }
	\label{fig:data}
\end{figure}
\section{Cartoon functions}\label{main}
The bound in \eqref{eq:oldstab} applies to the space of square-integrable $R$-band-limited functions. Many signals of practical significance (e.g., natural images) are, however, not band-limited (due to the presence of sharp and possibly curved edges, see Fig. \ref{fig:data}) or exhibit large bandwidths. In the latter case, the deformation stability bound \eqref{eq:oldstab} becomes void as it depends linearly on $R$. 

The goal of this paper is to take structural properties of natural images into account by considering  the class of cartoon functions introduced in \cite{Cartoon}. These functions satisfy mild decay properties and are piecewise continuously differentiable apart from curved discontinuities along $C^2$-hypersurfaces. Cartoon functions provide a good model for natural images (see Fig. \ref{fig:data}, left) such as those in the Caltech-256 \cite{Caltech256} and CIFAR-100 \cite{CIFAR2} data sets, for images of handwritten digits \cite{MNIST} (see Fig. \ref{fig:data}, right), and for images of geometric objects of different shapes, sizes, and colors \cite{BABYAI}.

We will work with the following---relative to the definition in \cite{Cartoon}---slightly modified version of cartoon functions. 

\begin{definition}\label{def:cartoon}
The function $f:\R^d \to \mathbb{C}$ is referred to as a cartoon function if it can be written as $f = f_1 + \mathds{1}_Bf_2$, where $B\subseteq \R^d$ is a compact domain whose boundary $\partial B$ is a compact topologically embedded $C^2$-hypersurface of $\Rd$ without boundary\footnote{We refer the reader to \cite[Chapter 0]{RiemannianBook} for a review on differentiable manifolds.}, and $f_i\in L^2(\Rd)\cap C^1(\R^d,\mathbb{C})$, $i=1,2$, satisfy the decay condition 
\begin{align}\label{eq:decay}
|\nabla f_i(x)|\le C \langle x\rangle^{-d},\quad i=1,2,
\end{align}
for some $C>0$ (not depending on $f_1$,$f_2$). Furthermore, we denote by 
\begin{align*}
&\mathcal{C}^{K}_{\mathrm{CART}}:=\{f_1 + \mathds{1}_Bf_2 \ | \  f_i\in L^2(\Rd)\cap C^1(\R^d,\mathbb{C}), \ i=1,2,\\ 
 &|\nabla f_i(x)|\le K \langle x\rangle^{-d}, \ \emph{vol}^{d-1}(\partial B)\leq K, \ \|f_2\|_\infty \leq K\}
\end{align*} 
the class of cartoon functions of  ``size'' $K>0$.
\end{definition}

%
We chose the term ``size'' to indicate the length $\mbox{vol}^{d-1}(\partial B)$ of the hypersurface  $\partial B$. Furthermore,  $\mathcal{C}^{K}_{\mathrm{CART}}\subseteq L^2(\Rd)$, for all $K>0$; this  follows from the triangle inequality according to $\|f_1+\mathds{1}_Bf_2 \|_2\leq \|f_1 \|_2 + \| \mathds{1}_B f_2\|_2\leq \|f_1 \|_2+ \| f_2\|_2<\infty$, where in the last step we used $f_1,f_2\in L^2(\Rd)$. Finally, we note that our  main results---presented in the next section---can easily be generalized to finite linear combinations of cartoon functions, but this is not done here for simplicity of exposition. 

\section{Main results}
We start by reviewing the decoupling technique introduced in \cite{Wiatowski_journal} to prove deformation stability bounds for band-limited functions. The proof of the deformation stability bound \eqref{eq:oldstab} for band-limited functions in \cite{Wiatowski_journal} is based on two key ingredients. The first one is a contractivity property of $\Phi_\Omega$ (see \cite[Prop. 4]{Wiatowski_journal}), namely $||| \Phi_\Omega (f) - \Phi_\Omega(h) |||\leq \| f-h \|_2$, for all $f,h \in L^2(\Rd)$. Contractivity guarantees that pairwise distances of input signals do not increase through feature extraction. The second ingredient is an upper bound on the deformation error $\| f-F_\tau f \|_2$ (see \cite[Prop. 5]{Wiatowski_journal}), specific to the signal class considered in \cite{Wiatowski_journal}, namely band-limited functions. Re\-cognizing that the combination of these two ingredients yields a simple proof of deformation stability is interesting as it shows that whenever a signal class exhibits inherent stability w.r.t. deformations of the form $(F_\tau f)(x)=f(x-\tau(x))$, we automatically obtain deformation stability for the feature extractor $\Phi_\Omega$. The present paper employs this decoupling technique and establishes deformation stability for the class of cartoon functions by deriving an upper bound on the deformation error $\| f-F_\tau f \|_2$ for $f\in \mathcal{C}^{K}_{\mathrm{CART}}$. 


\begin{proposition}\label{prop:main}
For every $K>0$, there exists a constant $C_K>0$ such that for all $f\in \mathcal{C}^K_{\mathrm{CART}}$ and all (possibly non-linear) $\tau:\R^d \to \R^d$ with $\| \tau \|_\infty<\frac{1}{2}$, it holds that 
  \begin{equation}\label{eq:main}ƒ
  	\|f - F_\tau f \|_2\le C_K\|\tau\|_\infty^{1/2}.
  \end{equation}
\end{proposition}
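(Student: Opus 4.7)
The plan is to split $f = f_1 + \mathds{1}_B f_2$ per Definition~\ref{def:cartoon} and to use the triangle inequality
\[
\|f - F_\tau f\|_2 \le \|f_1 - F_\tau f_1\|_2 + \|\mathds{1}_B f_2 - F_\tau(\mathds{1}_B f_2)\|_2,
\]
treating the smooth part and the jump part separately. For the smooth part $f_1$, the mean value inequality gives
\[
|f_1(x) - f_1(x - \tau(x))| \le \|\tau\|_\infty \sup_{\theta \in [0,1]} |\nabla f_1(x - \theta \tau(x))|,
\]
and the decay hypothesis $|\nabla f_1(y)| \le K \langle y \rangle^{-d}$ together with $\|\tau\|_\infty < 1/2$ forces $\langle x - \theta \tau(x)\rangle \gtrsim \langle x\rangle$ uniformly in $\theta$. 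Squaring, integrating over $\R^d$, and using $\langle \cdot \rangle^{-d}\in L^2(\R^d)$ then yields $\|f_1 - F_\tau f_1\|_2 \le c_1(K)\,\|\tau\|_\infty$.

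For the jump part I would insert $\pm\,\mathds{1}_B(x) f_2(x - \tau(x))$ and split
\[
\mathds{1}_B(x) f_2(x) - \mathds{1}_B(x - \tau(x)) f_2(x - \tau(x)) = \mathds{1}_B(x)\bigl[f_2(x) - f_2(x - \tau(x))\bigr] + \bigl[\mathds{1}_B(x) - \mathds{1}_B(x - \tau(x))\bigr] f_2(x - \tau(x)).
\]
The first summand is handled exactly as in the smooth case via the decay bound on $\nabla f_2$, contributing $O(\|\tau\|_\infty)$ in $L^2$. The second summand has pointwise modulus at most $\|f_2\|_\infty \le K$, and is supported in the tubular neighborhood
\[
T_r(\partial B) := \{x \in \R^d : \mathrm{dist}(x, \partial B) \le r\}
\]
with $r = \|\tau\|_\infty$, because $\mathds{1}_B(x) \ne \mathds{1}_B(x - \tau(x))$ forces the segment from $x - \tau(x)$ to $x$ to meet $\partial B$. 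Hence its $L^2$-norm is bounded by $K \cdot \mbox{vol}^{d}\bigl(T_{\|\tau\|_\infty}(\partial B)\bigr)^{1/2}$.

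The main obstacle is then a Minkowski-content estimate of the form $\mbox{vol}^{d}(T_r(\partial B)) \le c_2(K)\, r$ valid for all $r < 1/2$, using only $\mbox{vol}^{d-1}(\partial B) \le K$ and the $C^2$ regularity. For radii below the reach of the compact $C^2$-hypersurface $\partial B$, the tubular neighborhood theorem furnishes a diffeomorphic parametrization $(\sigma, t) \mapsto \sigma + t\,\nu(\sigma)$ via the unit normal field, whose Jacobian equals $\prod_{i=1}^{d-1}(1 - t\kappa_i(\sigma))$; the coarea formula then yields $\mbox{vol}^{d}(T_r) \lesssim \mbox{vol}^{d-1}(\partial B)\, r$, and the full range $r < 1/2$ is handled by partitioning $\partial B$ into small patches on which the same local estimate applies and summing the contributions. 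Combining everything,
\[
\|f - F_\tau f\|_2 \le c_1(K)\,\|\tau\|_\infty + K\,\sqrt{c_2(K)\,\|\tau\|_\infty} \le C_K\,\|\tau\|_\infty^{1/2},
\]
where the last inequality uses $\|\tau\|_\infty < 1/2$ so that the $\|\tau\|_\infty^{1/2}$ term dominates. As a sanity check, the exponent $1/2$ is sharp, since a bounded jump along a $(d-1)$-surface contributes $\Theta(r^{1/2})$ error under a translation of order $r$.
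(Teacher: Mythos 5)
Your proposal follows essentially the same route as the paper: the same decomposition $f = f_1 + \mathds{1}_B f_2$, the same three-term triangle inequality exploiting $F_\tau(\mathds{1}_B f_2) = (F_\tau\mathds{1}_B)(F_\tau f_2)$, the mean value theorem plus the decay condition for the $O(\|\tau\|_\infty)$ bound on the smooth parts, and the tubular-neighborhood volume estimate giving the dominant $O(\|\tau\|_\infty^{1/2})$ term from the indicator. The only divergence is that the paper outsources the bound $\mathrm{vol}^{d}\big(\partial B + B_{r}(0)\big) \le C_{\partial B}\, r$ to a cited lemma rather than proving it via the tubular neighborhood theorem as you sketch; your patching argument for radii beyond the reach is the one step needing care, since the constant must be controllable by $\mathrm{vol}^{d-1}(\partial B)\le K$ alone for $C_K$ to depend only on $K$.
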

\begin{proof}[Proof]
See Appendix \ref{app:prothm1}.
\end{proof}
The Lipschitz exponent $\alpha=\frac{1}{2}$ on the right-hand side (RHS) of \eqref{eq:main} determines the decay rate of the deformation error $\|f - F_\tau f \|_2$  as $\| \tau \|_\infty\to 0$. Clearly, larger $\alpha>0$ results in the deformation error decaying faster as the deformation becomes smaller. The following simple example shows that the Lipschitz exponent $\alpha=\frac{1}{2}$ in \eqref{eq:main} is best possible, i.e., it can not be larger. Consider $d=1$ and $\tau_s(x)= s$,  for a fixed $s$ satisfying $0<s<\frac{1}{2}$; the corresponding deformation $F_{\tau_s}$ amounts to a simple translation by $s$ with $\| \tau_s \|_\infty=s<\frac{1}{2}$. Let $f=\mathds{1}_{[-1,1]}$. Then,  $f\in \mathcal{C}^K_{\mathrm{CART}}$ for some $K>0$ and 
$\|f - F_{\tau_s}f\|_2= \sqrt{2s}=\sqrt{2}\| \tau \|^{1/2}_\infty.$
\begin{remark}\label{remark_int}
It is interesting to note that in order to obtain bounds of the form $\|f - F_\tau f \|_2\le C\|\tau\|_\infty^\alpha$, for $f\in \mathcal{C}\subseteq L^2(\R^d)$, for some $C>0$ (that does not depend on $f$, $\tau$) and some $\alpha> 0$, we need to impose \textit{non-trivial} constraints on the set  $\mathcal{C}\subseteq L^2(\R^d)$. Indeed, consider, again, $d=1$ and $\tau_s(x) = s$,  for small $s>0$. Let $f_s \in L^2(\Rd)$ be a function that has its energy $\|f_s\|_2=1$ concentrated in a small interval according to $\mbox{supp}(f_s)\subseteq [-s/2,s/2]$. Then, $f_s$ and $F_{\tau_s}f_s$ have disjoint support sets and hence 
$\|f_s - F_{\tau_s}f_s\|_2= \sqrt{2},$
which does not decay with $\|\tau\|_\infty^\alpha = s^\alpha$ for any $\alpha>0$. More generally, the amount of deformation induced by a given function $\tau$ depends strongly on the signal (class) it is applied to. Concretely, the deformation $F_\tau$ with  $\tau(x)=e^{-x^2}$, $x\in \mathbb{R}$, will lead to a small bump around the origin only when applied to a low-pass function, whereas the function $f_s$ above will experience a significant deformation.
\end{remark}
We are now ready to state our main result.
\begin{theorem}\label{mainmain}
 Let $\Omega=\big( (\Psi_n,M_n,R_n)\big)_{n \in \mathbb{N}}$ be a module-sequence satisfying the weak admissibility condition \eqref{weak_admiss2}. For every size $K>0$, the feature extractor $\Phi_\Omega$ is deformation-stable on the space of cartoon functions  $\mathcal{C}^K_{\mathrm{CART}}$ w.r.t. deformations $(F_{\tau} f)(x)=f(x-\tau(x))$, i.e., for every $K>0$, there exists a constant $C_K>0$ (that does not depend on $\Omega$) such that for all $f \in \mathcal{C}^K_{\mathrm{CART}}$, and all (possibly non-linear) $\tau \in C^1(\Rd,\Rd)$ with $\| \tau\|_\infty <\frac{1}{2}$ and $\| D \tau \|_\infty\leq\frac{1}{2d}$, it holds that
\begin{equation}\label{mainmainmain}
||| \Phi_\Omega(F_{\tau} f)-\Phi_\Omega(f) |||\leq C_K \| \tau \|_\infty^{1/2}.
\end{equation}
\end{theorem}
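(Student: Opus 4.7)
The plan is to apply the decoupling principle highlighted in the discussion preceding Proposition \ref{prop:main}: combine contractivity of $\Phi_\Omega$ on $L^2(\Rd)$ with the cartoon-specific deformation estimate of Proposition \ref{prop:main}. All of the analytical work has been pushed into Proposition \ref{prop:main}, so what remains for the theorem itself is a short assembly.

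First I would verify that both $f$ and $F_\tau f$ lie in $L^2(\Rd)$, so that a contractivity statement can be invoked. The inclusion $f\in L^2(\Rd)$ is noted right after Definition \ref{def:cartoon}. For $F_\tau f$, the bound $\|D\tau\|_\infty \le 1/(2d)$ makes $x\mapsto x-\tau(x)$ a bi-Lipschitz diffeomorphism of $\Rd$ with Jacobian determinant uniformly bounded away from zero, so a change of variables yields $\|F_\tau f\|_2 \lesssim \|f\|_2 < \infty$. Next I would invoke contractivity: by \cite[Prop.~4]{Wiatowski_journal}, the weak admissibility condition \eqref{weak_admiss2} implies
\[
||| \Phi_\Omega(g) - \Phi_\Omega(h) ||| \;\le\; \|g-h\|_2 \qquad \text{for all } g,h\in L^2(\Rd).
\]
Specializing to $g = F_\tau f$ and $h = f$, and then applying Proposition \ref{prop:main}, gives
\[
||| \Phi_\Omega(F_\tau f) - \Phi_\Omega(f) ||| \;\le\; \|F_\tau f - f\|_2 \;\le\; C_K\,\|\tau\|_\infty^{1/2},
\]
which is exactly \eqref{mainmainmain}. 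The constant $C_K$ is inherited unchanged from Proposition \ref{prop:main}, and is therefore manifestly independent of the module-sequence $\Omega$, as claimed.

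The main obstacle is not located in this theorem at all: the decoupling collapses the proof to a two-line composition. Substantively, everything rests on Proposition \ref{prop:main}, where the geometric structure of cartoon functions---the $\langle x\rangle^{-d}$ gradient decay of the smooth parts together with the $C^2$-regularity of the singular hypersurface $\partial B$---must be exploited to produce the Lipschitz exponent $1/2$. The role of the extra hypothesis $\|D\tau\|_\infty \le 1/(2d)$ in the theorem, beyond what Proposition \ref{prop:main} requires, is purely to ensure $L^2$-membership of $F_\tau f$ so that the contractivity step is well-posed; the deformation estimate itself uses only $\|\tau\|_\infty < 1/2$.
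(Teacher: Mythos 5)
Your proposal is correct and follows essentially the same route as the paper: contractivity of $\Phi_\Omega$ applied to $g=F_\tau f$, $h=f$, combined with Proposition \ref{prop:main}, with the hypothesis $\|D\tau\|_\infty\le\frac{1}{2d}$ used only to guarantee $F_\tau f\in L^2(\Rd)$ (the paper delegates this last point to \cite[Eq. 58]{Wiatowski_journal} and \cite[Lemma 2]{Wiatowski_journal}, which is exactly your change-of-variables argument). No gaps.
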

\begin{proof}[Proof]
Applying the contractivity property $||| \Phi_\Omega(g) - \Phi_\Omega(h)||| \leq \| g-h \|_2$ with $g=F_\tau f$ and $h=f$, and using \eqref{eq:main} yields \eqref{mainmainmain} upon invoking the same arguments as in \cite[Eq. 58]{Wiatowski_journal} and \cite[Lemma 2]{Wiatowski_journal} to conclude that $f\in L^2(\Rd)$ implies $F_\tau f \in L^2(\Rd)$ thanks to $\| D\tau \|_\infty \leq \frac{1}{2d}$.
\end{proof}
The strength of the deformation stability result in Theorem \ref{mainmain} derives itself from the fact that the only condition we need to impose on the underlying module-sequence $\Omega$ is weak admissibility according to \eqref{weak_admiss2}, which as argued in \cite[Sec. 3]{Wiatowski_journal}, can easily be met by normalizing the elements in $\Psi_n$, for all $n\in \mathbb{N}$, appropriately. We emphasize that this normalization does not have an impact on the constant $C_K$ in \eqref{mainmainmain}, which is shown in Appendix \ref{app:prothm1} to be independent of $\Omega$. The dependence of $C_K$ on $K$ does, however, reflect the intuition that the deformation stability bound should depend on the signal class description complexity. For band-limited signals, this dependence is exhibited by the RHS in \eqref{eq:oldstab} being linear in the bandwidth $R$. 
Finally, we note that the vertical translation invariance result \cite[Thm. 2]{Wiatowski_journal} applies to all $f\in L^2(\Rd)$, and,  thanks to $\mathcal{C}^K_{\mathrm{CART}}\subseteq L^2(\Rd)$, for all $K>0$, carries over to cartoon functions. 
\begin{remark}
We note that thanks to the decoupling technique underlying our arguments, the deformation stability bounds \eqref{eq:oldstab} and \eqref{mainmainmain} are very general in the sense of applying to every contractive (linear or non-linear) mapping $\Phi$. Specifically, the identity mapping $\Phi(f) = f$ also leads to deformation stability on the class of cartoon functions (and the class of band-limited functions). This is interesting as it was recently demonstrated that employing the identity mapping as a so-called \textit{shortcut-connection} in a subset of layers of a \textit{very deep} convolutional neural network yields state-of-the-art classification performance on the ImageNet dataset \cite{he2015deep}. Our deformation stability result is hence general in the sense of applying to a broad class of network architectures used in practice.
\end{remark}
For functions that do not exhibit discontinuities along $C^2$-hypersurfaces, but otherwise satisfy the decay condition \eqref{eq:decay}, we can improve the decay rate of the deformation error from $\alpha=\frac{1}{2}$ to $\alpha=1$. 
\begin{corollary}\label{cor:main}
 Let $\Omega=\big( (\Psi_n,M_n,R_n)\big)_{n \in \mathbb{N}}$ be a module-sequence satisfying the weak admissibility condition \eqref{weak_admiss2}. For every size $K>0$, the feature extractor $\Phi_\Omega$ is deformation-stable on the space $H_K:=\{ f \in L^2(\Rd)\,\cap \,C^1(\Rd,\C) \ | \ |\nabla f(x)|\le K \langle x\rangle^{-d} \}$ w.r.t. deformations $(F_{\tau} f)(x)=f(x-\tau(x))$, i.e., for every $K>0$, there exists a constant $C_K>0$ (that does not depend on $\Omega$) such that for all $f \in H_K$, and all (possibly non-linear) $\tau \in C^1(\Rd,\Rd)$ with $\| \tau\|_\infty <\frac{1}{2}$ and $\| D \tau \|_\infty\leq\frac{1}{2d}$, it holds that
\begin{equation*}
||| \Phi_\Omega(F_{\tau} f)-\Phi_\Omega(f) |||\leq C_K \| \tau \|_\infty.
\end{equation*}
\end{corollary}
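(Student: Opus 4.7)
The plan is to apply the same decoupling paradigm used in the proof of Theorem \ref{mainmain}: combine the contractivity estimate $|||\Phi_\Omega(g) - \Phi_\Omega(h)||| \leq \|g-h\|_2$ of \cite[Prop. 4]{Wiatowski_journal} with a sharpened deformation error bound tailored to the class $H_K$. Concretely, once we establish
\begin{equation}\label{eq:corplan}
\|f - F_\tau f\|_2 \leq C_K \|\tau\|_\infty, \qquad f \in H_K,
\end{equation}
the corollary follows verbatim from the proof of Theorem \ref{mainmain}, with the membership $F_\tau f \in L^2(\R^d)$ again guaranteed by $\|D\tau\|_\infty \leq 1/(2d)$ via \cite[Lemma 2]{Wiatowski_journal}. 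Thus everything reduces to proving \eqref{eq:corplan}.

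To prove \eqref{eq:corplan}, I would exploit that $f \in H_K$ is globally $C^1$, so there is no boundary jump term to handle, which is precisely what caused the square-root loss in Proposition \ref{prop:main}. By the fundamental theorem of calculus applied along the segment from $x-\tau(x)$ to $x$,
\begin{equation*}
f(x) - (F_\tau f)(x) = \int_0^1 \nabla f\bigl(x - t\tau(x)\bigr)\cdot \tau(x)\, dt,
\end{equation*}
so the decay hypothesis yields the pointwise bound
\begin{equation*}
\bigl|f(x) - (F_\tau f)(x)\bigr| \leq K\|\tau\|_\infty \int_0^1 \langle x - t\tau(x)\rangle^{-d}\, dt.
\end{equation*}
Minkowski's integral inequality then gives
\begin{equation*}
\|f - F_\tau f\|_2 \leq K\|\tau\|_\infty \int_0^1 \Bigl(\int_{\R^d} \langle x - t\tau(x)\rangle^{-2d}\, dx\Bigr)^{1/2}\, dt.
\end{equation*}

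The only remaining point is to bound the inner integral uniformly in $t \in [0,1]$ and $\tau$ with $\|\tau\|_\infty < 1/2$. I expect this to be the main technical step, though it is mild: since $|x - t\tau(x)| \geq |x| - 1/2$, on $\{|x| \leq 1\}$ the integrand is bounded by $1$ and contributes at most $\operatorname{vol}^d(B(0,1))$; on $\{|x| > 1\}$ one has $|x - t\tau(x)| \geq |x|/2$, hence $\langle x-t\tau(x)\rangle^2 \geq \tfrac{1}{4}\langle x\rangle^2$, so that $\langle x - t\tau(x)\rangle^{-2d} \leq 4^d \langle x\rangle^{-2d}$, which is integrable because $2d > d$. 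Putting these two pieces together yields a finite constant $C_d' > 0$ with $\int_{\R^d}\langle x - t\tau(x)\rangle^{-2d}\,dx \leq C_d'$ uniformly in $t$ and $\tau$, whence \eqref{eq:corplan} holds with $C_K := K\sqrt{C_d'}$. The corollary then follows by contractivity as indicated above.
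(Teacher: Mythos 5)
Your proposal is correct and follows essentially the same route as the paper: the paper's proof of Corollary \ref{cor:main} likewise combines contractivity with the linear deformation-error bound \eqref{eq:p1p1} of Lemma \ref{lem:lip}, which is exactly your estimate \eqref{eq:corplan}. The only difference is internal to that lemma --- you re-derive it via the fundamental theorem of calculus along the segment plus Minkowski's integral inequality and the pointwise comparison $\langle x-t\tau(x)\rangle^{-2d}\le 4^d\langle x\rangle^{-2d}$ on $\{|x|>1\}$, whereas the paper uses the mean value theorem and the change of variables $u=(1-\|\tau\|_\infty)x$ --- but both arguments split $\R^d$ into $B_1(0)$ and its complement and yield the same constant structure.
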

\begin{proof}
The proof follows that of Theorem \ref{mainmain} apart from employing \eqref{eq:p1p1} instead of \eqref{eq:main}.\end{proof}
\appendices
\section{Proof of Proposition \ref{prop:main}}\label{app:prothm1}
The proof of \eqref{eq:main} is based on judiciously combining deformation stability bounds for the components $f_1,f_2$ in $(f_1+\mathds{1}_Bf_2) \in \mathcal{C}^K_{\mathrm{CART}}$ and for the indicator function $\mathds{1}_B$. The first bound, stated in Lemma \ref{lem:lip} below, reads
 \begin{equation}\label{eq:main3}
  	\|f - F_\tau f\|_2\le C  D\|\tau\|_\infty,
  \end{equation}
and applies to functions $f$ satisfying the decay condition \eqref{eq:lll}, 
with the constant $C>0$ as defined in \eqref{eq:lll} and $D>0$ not depending on $f$, $\tau$ (see \eqref{eq:ll}). The bound in \eqref{eq:main3} requires  the assumption $\| \tau \|_\infty<\frac{1}{2}$. The second bound, stated in Lemma \ref{lem:ind} below,  is 
  \begin{equation}\label{eq:main2}
  	\|\mathds{1}_B - F_\tau \mathds{1}_B \|_2\le  C_{\partial B}^{1/2}   \|\tau\|_\infty^{1/2},
  \end{equation}
where the constant $C_{\partial B}>0$ is independent of $\tau$.
 We now show how \eqref{eq:main3} and \eqref{eq:main2} can be combined to establish  \eqref{eq:main}. For $f=(f_1+\mathds{1}_Bf_2)\in \mathcal{C}^K_{\mathrm{CART}}$, we have
\begin{align}
&\|f-F_\tau f \|_2\leq \| f_1 - F_\tau f_1 \|_2\nonumber \\
&+\|\mathds{1}_B(f_2-F_\tau f_2) \|_2 + \|  (\mathds{1}_B - F_\tau \mathds{1}_B)(F_\tau f_2)\|_2 \label{eq:p1}\\
&\leq  \hspace{-0.05cm}\| f_1 - F_\tau f_1 \|_2 \hspace{-0.05cm}+\|f_2-F_\tau f_2 \|_2 \hspace{-0.07cm}+ \| \mathds{1}_B - F_\tau \mathds{1}_B\|_2\| F_\tau f_2 \|_\infty,\nonumber
\end{align}
where in \eqref{eq:p1} we used $(F_\tau(\mathds{1}_Bf_2))(x)=(\mathds{1}_Bf_2)(x-\tau(x))=\mathds{1}_B(x-\tau(x))f_2((x-\tau(x)))=(F_\tau\mathds{1}_B)(x)(F_\tau f_2)(x)$. With the upper bounds \eqref{eq:main3} and \eqref{eq:main2}, invoking properties of the class of cartoon functions $\mathcal{C}^K_{\mathrm{CART}}$ (namely, (i) $f_1$,$f_2$ satisfy \eqref{eq:decay} and thus, by Lemma \ref{lem:lip}, \eqref{eq:main3} with $C=K$, and (ii) $\| F_\tau f_2 \|_\infty =\sup_{x\in \Rd}|f_2(x-\tau(x))|\leq \sup_{y\in \Rd}|f_2(y)|=\| f_2\|_\infty\leq K)$, this yields 
\begin{align*}
\|f-F_\tau f \|_2 \leq&  \, 2\, KD\, \| \tau \|_\infty +  K C_{\partial B}^{1/2}    \|\tau\|_\infty^{1/2}\label{eq:p1p}\\
\leq&\underbrace{2\max\{2KD,K C_{\partial B}^{1/2}\}}_{=:C_K} \| \tau \|_\infty^{1/2} \nonumber,
\end{align*}
which completes the proof of \eqref{eq:main}. 

It remains to show \eqref{eq:main3} and \eqref{eq:main2}.
\begin{lemma}\label{lem:lip}
Let $f\in L^2(\Rd)\cap C^1(\R^d,\C)$ be such that
\begin{equation}\label{eq:lll}
|\nabla f(x)|\le C\langle x \rangle^{-d},
\end{equation}
for some constant $C>0$, and let $\| \tau \|_\infty<\frac{1}{2}$. Then,
	\begin{equation}\label{eq:p1p1}
		\|f - F_\tau f\|_2 \le C  D \|\tau\|_\infty,
	\end{equation}
for a constant $D>0$ that does not depend on $f$, $\tau$.
\end{lemma}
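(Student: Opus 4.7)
The plan is to establish the pointwise identity $f(x)-(F_\tau f)(x)=\int_0^1 \langle \nabla f(x-t\tau(x)),\tau(x)\rangle \, dt$ via the fundamental theorem of calculus along the segment connecting $x-\tau(x)$ to $x$ (this is legitimate since $f\in C^1(\R^d,\C)$), and then bound the modulus by $\|\tau\|_\infty \int_0^1 |\nabla f(x-t\tau(x))|\,dt$. Squaring, integrating over $\R^d$, and applying Jensen's inequality in the $t$-variable yields
\[
\|f-F_\tau f\|_2^2 \le \|\tau\|_\infty^2 \int_0^1 \int_{\R^d} |\nabla f(x-t\tau(x))|^2 \, dx\, dt.
\]
Inserting the hypothesis $|\nabla f(y)|\le C\langle y\rangle^{-d}$ with $y=x-t\tau(x)$ then reduces the problem to bounding $\int_0^1\int_{\R^d}\langle x-t\tau(x)\rangle^{-2d}\,dx\,dt$ by a constant depending only on $d$.

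The main obstacle is to show that the inner integral is finite uniformly in $\tau$, $t$. The plan here is to exploit $\|\tau\|_\infty<\tfrac{1}{2}$ to relate $\langle x-t\tau(x)\rangle$ and $\langle x\rangle$. Splitting $\R^d$ into the ball $\{|x|<1\}$ and its complement: on the ball the integrand is bounded by $1$ and the region has finite volume; on the complement $\{|x|\ge 1\}$ we use $|x-t\tau(x)|\ge |x|-\tfrac{1}{2}\ge |x|/2$, which yields $\langle x-t\tau(x)\rangle^{-2d}\le c_d \langle x\rangle^{-2d}$ for a constant $c_d$ depending only on $d$. Since $2d>d$, the tail integral $\int_{\R^d}\langle x\rangle^{-2d}\,dx$ converges, and we obtain
\[
\int_0^1\int_{\R^d}\langle x-t\tau(x)\rangle^{-2d}\,dx\,dt \le D^2
\]
for some $D=D(d)>0$ independent of $f$ and $\tau$.

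Combining the two bounds gives $\|f-F_\tau f\|_2 \le C D\|\tau\|_\infty$, which is exactly \eqref{eq:p1p1}, with $D$ depending only on the ambient dimension $d$ and $C$ being the constant from the decay hypothesis \eqref{eq:lll}. The assumption $\|\tau\|_\infty<\tfrac{1}{2}$ enters only in the comparison of $\langle x-t\tau(x)\rangle$ with $\langle x\rangle$; any fixed bound $\|\tau\|_\infty<\gamma<1$ would work, possibly at the cost of a larger $D$.
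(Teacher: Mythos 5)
Your proof is correct and follows essentially the same route as the paper: a mean-value-type pointwise bound giving a factor $\|\tau\|_\infty$ times the gradient near $x$, insertion of the decay hypothesis, and a split of the resulting integral into the unit ball and its complement. The only cosmetic differences are that you use the integral form of the mean value theorem together with Jensen's inequality where the paper uses the sup form, and you compare $\langle x-t\tau(x)\rangle$ with $\langle x\rangle$ directly on $\{|x|\ge 1\}$ where the paper performs the change of variables $u=(1-\|\tau\|_\infty)x$; both yield a constant $D$ depending only on $d$.
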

\begin{proof}
We first upper-bound the integrand in 
$
\|f - F_\tau f \|_2^2=\int_{\R^d}|f(x) - f(x-\tau(x))|^2 \mathrm dx.
$
Owing to the mean value theorem \cite[Thm. 3.7.5]{Comenetz}, we have
\begin{align*}
		|f(x) - f(x-\tau(x))|&\le  \|\tau\|_\infty\sup_{y \in B_{\|\tau\|_\infty}(x)}|\nabla f(y)|\\
		&\leq\underbrace{C\|\tau\|_\infty\sup_{y \in B_{\|\tau\|_\infty}(x)}\langle y \rangle^{-d}}_{=:h(x)},
\end{align*}
where the last inequality follows by assumption. The idea is now to split the integral $\int_{\Rd}|h(x)|^2\mathrm dx$ into integrals over the sets $B_{1}(0)$ and $\R^d \backslash B_{1}(0)$. For $x\in B_{1}(0)$, the monotonicity of the function $x\mapsto \langle x \rangle^{-d}$ implies $h(x)\leq C\|\tau\|_\infty \langle 0 \rangle^{-d}=C\|\tau\|_\infty$, and for $x\in \R^d \backslash B_{1}(0)$, we have $(1-\| \tau \|_\infty)\leq (1-\frac{\|\tau\|_\infty}{|x|})$, which together with the monotonicity of $x\mapsto \langle x \rangle^{-d}$ yields $h(x)\leq C\|\tau\|_\infty \langle (1-\frac{\|\tau\|_\infty}{|x|})x \rangle^{-d}\leq C\|\tau\|_\infty \langle (1-\|\tau\|_\infty)x \rangle^{-d}$. Putting things together, we hence get
	\begin{align}
		&\|f - F_\tau f\|_2^2 \le C^2\|\tau\|_\infty^2\Big( \mbox{vol}^{d}\big(B_{1}(0)\big)\nonumber \\
		&+2^{d}\int_{\R^d}\langle u \rangle^{-2d}\mathrm du\Big)\label{eq:p5}\\
		&\le C^2\|\tau\|_\infty^2\underbrace{\Big( \mbox{vol}^{d}\big(B_{1}(0)\big) +2^{d}\| \langle \cdot \rangle^{-d}\|_2^2\Big)}_{=:D^2},\label{eq:ll}	
		\end{align}
where in \eqref{eq:p5} we used the change of variables $u=(1-\| \tau\|_\infty)x$, together with $
\frac{\mathrm du}{\mathrm dx}=(1-\| \tau \|_\infty)^d \geq 2^{-d}$,
where the last inequality follows from $\| \tau \|_\infty<\frac{1}{2}$, which is by assumption.
Since $\| \langle \cdot \rangle^{-d}\|_2<\infty$, for $d\in \mathbb{N}$ (see, e.g., \cite[Sec. 1]{Grafakos}), and, obviously, $\mbox{vol}^{d}\big(B_{1}(0)\big)<\infty$, it follows that $D^2<\infty$, which completes the proof.
\end{proof}
\vspace{-0.2cm}
We continue with a deformation stability result for indicator functions $\mathds{1}_B$.
\begin{lemma}\label{lem:ind}
Let $B\subseteq \R^d$ be a compact domain whose boundary $\partial B$ is a compact topologically embedded $C^2$-hypersurface of $\Rd$ without boundary. Then, there exists a constant $C_{\partial B}>0$ (that does not depend on $\tau$) such that for all $\tau:\Rd \to \Rd$ with $\|\tau\|_\infty \leq 1$, it holds that %
 \begin{equation*}\label{eq:cartdef}
 	\|\mathds{1}_B - F_\tau \mathds{1}_{B} \|_2
 	\le C_{\partial B}^{1/2} \|\tau\|_\infty^{1/2}.
 \end{equation*}
\end{lemma}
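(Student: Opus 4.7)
The plan is to exploit the fact that the symmetric difference between $\mathds{1}_B$ and $F_\tau\mathds{1}_B$ is concentrated in a thin tubular neighborhood of $\partial B$, and to bound the volume of this neighborhood linearly in $\|\tau\|_\infty$ using the $C^2$-regularity of $\partial B$.

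First, observe that the pointwise integrand $|\mathds{1}_B(x) - \mathds{1}_B(x-\tau(x))|^2$ takes only the values $0$ or $1$, and equals $1$ precisely when $x$ and $x-\tau(x)$ lie on opposite sides of $\partial B$ (i.e., one is in $B$ and the other is in $\R^d \setminus B$). Since $B$ is a compact domain with boundary $\partial B$, the straight-line segment joining $x$ and $x-\tau(x)$ must then cross $\partial B$, which forces $\mathrm{dist}(x,\partial B) \le |\tau(x)| \le \|\tau\|_\infty$. Consequently, the support of the integrand is contained in the closed tubular neighborhood
\[
T_\varepsilon(\partial B) := \{x \in \R^d : \mathrm{dist}(x,\partial B) \le \varepsilon\},\qquad \varepsilon := \|\tau\|_\infty,
\]
so that $\|\mathds{1}_B - F_\tau \mathds{1}_B\|_2^2 \le \mathrm{vol}^d(T_\varepsilon(\partial B))$.

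Second, I would invoke the tubular neighborhood theorem for compact embedded $C^2$-hypersurfaces to linearize $\mathrm{vol}^d(T_\varepsilon(\partial B))$ in $\varepsilon$. Concretely, compactness and $C^2$-regularity of $\partial B$ guarantee a reach $\varepsilon_0 > 0$ such that for all $0 \le \varepsilon \le \varepsilon_0$, the normal map $(p,t) \mapsto p + t\nu(p)$ is a $C^1$-diffeomorphism from $\partial B \times [-\varepsilon,\varepsilon]$ onto $T_\varepsilon(\partial B)$, with Jacobian uniformly bounded in terms of the principal curvatures of $\partial B$. Integrating this Jacobian against the surface measure yields an estimate of the form $\mathrm{vol}^d(T_\varepsilon(\partial B)) \le c_1 \, \mathrm{vol}^{d-1}(\partial B) \, \varepsilon$ for $\varepsilon \le \varepsilon_0$, with a constant $c_1 > 0$ depending only on $\partial B$. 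For the complementary regime $\varepsilon_0 < \|\tau\|_\infty \le 1$, the crude estimate $\|\mathds{1}_B - F_\tau \mathds{1}_B\|_2^2 \le 2\,\mathrm{vol}^d(B)$ combined with $\|\tau\|_\infty \ge \varepsilon_0$ absorbs into a (larger) constant, so that both regimes can be merged into a single bound $\|\mathds{1}_B - F_\tau \mathds{1}_B\|_2^2 \le C_{\partial B}\,\|\tau\|_\infty$, from which the claim follows by taking square roots.

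The main obstacle is the tubular neighborhood volume estimate: it rests on the $C^2$-regularity and compactness of $\partial B$ via the reach, and one must be mindful that the constant $C_{\partial B}$ genuinely depends on the geometry of $\partial B$ (surface area and curvature bound), not just on $K$. Once this geometric ingredient is in place, the rest is bookkeeping—in particular, the deformation $\tau$ enters only through the bound $\|\tau\|_\infty$, and no smoothness on $\tau$ is required, which is crucial for the lemma to combine cleanly with Lemma \ref{lem:lip} in the proof of Proposition \ref{prop:main}.
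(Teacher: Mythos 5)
Your argument is essentially the paper's own proof: both reduce $\|\mathds{1}_B - F_\tau\mathds{1}_B\|_2^2$ to the volume of the set where $x$ and $x-\tau(x)$ straddle $\partial B$, observe that this set lies in a tube of radius $\|\tau\|_\infty$ around $\partial B$, and bound that tube's volume by $C_{\partial B}\|\tau\|_\infty$. The only difference is that the paper cites this last volume estimate from an external lemma (\cite[Lemma 2]{Wiatowski_energy}), whereas you sketch its proof via the tubular neighborhood theorem together with a case split at the reach $\varepsilon_0$ — which is the standard way that estimate is obtained, so the approaches coincide.
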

\begin{proof}
In order to upper-bound 
$
		\|\mathds{1}_B - F_\tau \mathds{1}_{B} \|_2^2=\int_{\R^d}|\mathds{1}_{B}(x) - \mathds{1}_{B}(x-\tau(x))|^2 \mathrm dx,
$
we first note that the integrand $h(x):=|\mathds{1}_{B}(x) - \mathds{1}_{B}(x-\tau(x))|^2$ satisfies $h(x)=1$, for $x\in S$, where
$S:=\{ x\in \R^d \, |\, x\in B \  \text{ and }\  x-\tau(x)\notin B \}  \cup \{ x\in \R^d \, |\, x\notin B \  \text{ and }\  x-\tau(x)\in B \},$
and $h(x)=0$, for $x\in \R^d\backslash S$. Moreover, owing to  $S\subseteq \big(\partial{B} + B_{\|\tau\|_\infty}(0)\big)$, where $(\partial{B} + B_{\|\tau\|_\infty}(0))$ is a tube of radius $\|\tau\|_\infty$ around the boundary $\partial{B}$ of $B$, and \cite[Lemma 2]{Wiatowski_energy},  there exists a  constant $C_{\partial B}>0$  such that $\mbox{vol}^{d}(S)\leq \mbox{vol}^{d}(\partial{B} + B_{\|\tau\|_\infty}(0)) \leq C_{\partial B}\|\tau\|_\infty,$ for all $\tau:\Rd\to\Rd$ with $\|\tau\|_\infty \leq 1$. We therefore have
$\|\mathds{1}_B - F_\tau \mathds{1}_{B} \|_2^2=\int_{\R^d}|h(x)|^2\mathrm dx =\int_{S}1\mathrm dx = \mbox{vol}^{d}(S)\leq C_{\partial B} \|\tau\|_\infty,
$ which completes\vspace{0.1cm} the proof.\end{proof}
\vspace{-0.25cm}
\bibliographystyle{IEEEtran}
\bibliography{scatbib}

\end{document}